\newtheorem{theorem}{Theorem}
\begin{document}

\title{Simplifying Graph Convolutional Networks with Redundancy-Free Neighbors}
\author{Jielong~Lu,
        Zhihao~Wu,
        Zhaoliang~Chen,
        Zhiling~Cai,
        Yueyang~Pi,
        Shiping~Wang}
\maketitle

\begin{abstract}
In recent years, Graph Convolutional Networks (GCNs) have gained popularity for their exceptional ability to process graph-structured data. 
Existing GCN-based approaches typically employ a shallow model architecture due to the over-smoothing phenomenon.
Current approaches to mitigating over-smoothing primarily involve adding supplementary components to GCN architectures, such as residual connections and random edge-dropping strategies. 
However, these improvements toward deep GCNs have achieved only limited success.
In this work, we analyze the intrinsic message passing mechanism of GCNs and identify a critical issue: messages originating from high-order neighbors must traverse through low-order neighbors to reach the target node.
This repeated reliance on low-order neighbors leads to redundant information aggregation, a phenomenon we term over-aggregation.
Our analysis demonstrates that over-aggregation not only introduces significant redundancy but also serves as the fundamental cause of over-smoothing in GCNs.
Motivated by this discovery, we introduce a novel framework named redundancy-free graph convolutional network,
where the neighbors of the graph are hierarchically organized so that the multi-order neighbor sets of a specific node do not intersect.
This organizational structure enables high-order neighbors to directly propagate their messages to the target node, thereby effectively avoiding duplicate aggregation.
The layer number of the proposed method adapts according to the graph structure, eliminating the need for manual adjustments to capture node information at specific distances.
The experimental results on sixteen real-world datasets demonstrate the superior performance of the proposed method on node- and graph-level tasks.

\begin{IEEEkeywords}
 Graph convolutional network, over-smoothing, hierarchical aggregation, over-aggregation.
\end{IEEEkeywords}

\end{abstract}

\section{Introduction}
\IEEEPARstart{G}{RAPHS} 
 are powerful and popular data structures for representing relational data and are widely utilized in various fields.
For example, graphs can be used to depict purchase relationships in recommendation systems \cite{ChenSWGLZ23, he2023dynamically} and molecular chemical bonding connections in biomedicine \cite{LeeHNK0P23}.
Graph Convolutional Networks (GCNs) \cite{GCN} have garnered significant attention for their excellent capability to process graph-structured data \cite{PIYUEYANG}. 
They have found applications in a variety of graph-related tasks, including motion capture \cite{wang2023dynamic,li2023graph, zhong2022spatio}, graph classification \cite{xie2021federated,yin2023coco,li2019semi} and traffic forecast \cite{jiang2023spatio, bai2020adaptive,guo2021hierarchical}.
However, most existing GCNs employ a 2-layer convolutional architecture, which limits their ability to capture long-distance information.
To overcome this limitation, recent endeavors have attempted to stack multiple graph convolutional layers.
\begin{figure}[!htbp]
	\centering
	\includegraphics[width=\linewidth]{./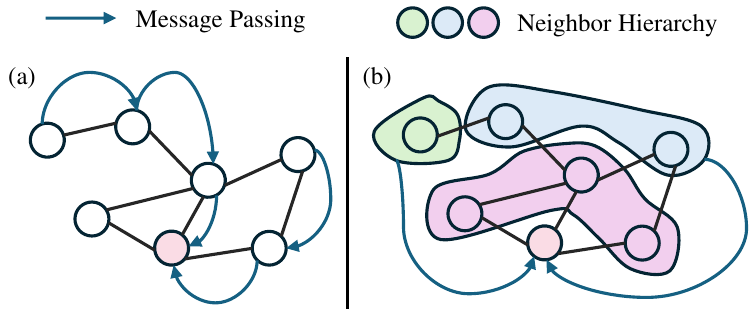}\\
	\caption{Comparison of message passing techniques on graphs. Subfigure (a) illustrates the stepwise information transfer from a node to a target node. Subfigure (b) depicts the proposed model that organizes the neighbors of a node into a hierarchical structure, reducing the risk of overutilizing information from intermediate nodes.}
	\label{useactivate}
\end{figure}
Nevertheless, this often leads to a significant performance decline due to the over-smoothing issue, where node representations become indistinguishably similar.
To address this issue, many researchers improve the performance of GCNs of deeper layers by utilizing random edge-dropping strategies on graphs \cite{DropEdge, SAEDGEDROP}, imposing regularization constraints \cite{PairNorm,guo2023contranorm}, and adding residual connections \cite{JKNET, APPNP, PDEGCN, GCNII, omegagcn}.
By adding supplementary components to typical GCNs, these methods have significantly improved the performance of deep GCNs.


Despite prior successes in alleviating the over-smoothing problem, deep GCNs constructed using the above methods still provide limited gains. 
For example, the performance achieved by stacking numerous layers using the aforementioned methods is only similar to a classical $2$-layer GCN, and in some cases, they may be even worse.
Besides, only $L$ hops are commonly required to capture comprehensive information, where $L$ is the diameter of the graph. 
For denser graphs, fewer hops are enough, e.g., each node can access the information of all other nodes within one hop in a complete graph.
However, existing deep GCNs struggle to adaptively determine the proper depths for different graphs.
These problems raise the question of whether existing solutions touch the essence behind the over-smoothing issue and deeper GCNs.

In search of the answer, we delve into the message passing mechanism of GCNs from the perspective of neighbor hierarchy.
In the existing GCNs, when a higher-order neighbor needs to pass a message to the target node, the message must traverse through lower-order neighbors, as illustrated in Figure \ref{useactivate} (a).
This results in the repeated aggregation of information from lower-order neighbors, leading to increased computational load as well as redundant information passing, which we call the over-aggregation issue.
For further analyzing this issue, we provide a theoretical demonstration showing that deeper GCNs may excessively aggregate information from neighboring nodes.
As a result, this redundant neighbor information squeezes out the nodes' self-information, diminishing the distinctiveness of the nodes and so that causing the over-smoothing issue.
Taking the widely adopted residual connections enhancing GCNs as an example,
we find that the phenomenon of over-utilizing the lower-order neighbors persists, although this kind of approach tries to enhance the self-attention of the nodes in the deep layer.
Consequently, existing approaches to the over-smoothing problem may not identify and address the deeper factor, the over-aggregation issue, which contributes to their weaknesses.

Building upon these insights, we introduce a framework named Redundancy-Free Graph Convolutional Network (RF-GCN), which specifically designs a simple yet effective strategy for tackling the over-aggregation problem. 
The proposed framework first hierarchically groups neighbors of each node into several non-overlapping sets according to their hops.
The neighbor information from each set is then aggregated by shortest paths to ensure that messages from higher-order neighbors reach the target node directly without relying on lower-order neighbors.
As shown in Figure \ref{useactivate} (b), distinguishing from the traditional message passing mechanism, RF-GCN enables the target node to interact directly with different hierarchies of neighbor sets.
In this way, we eliminate duplication in aggregation so that the nodes' self-information will not be squeezed by too much neighbor node information.
And the maximum depth of the network is limited to the diameter $L$ of the graph, which is just enough to capture the information of the entire graph.
\begin{figure*}[!htbp]
	\centering
	\includegraphics[width=0.97\textwidth]{./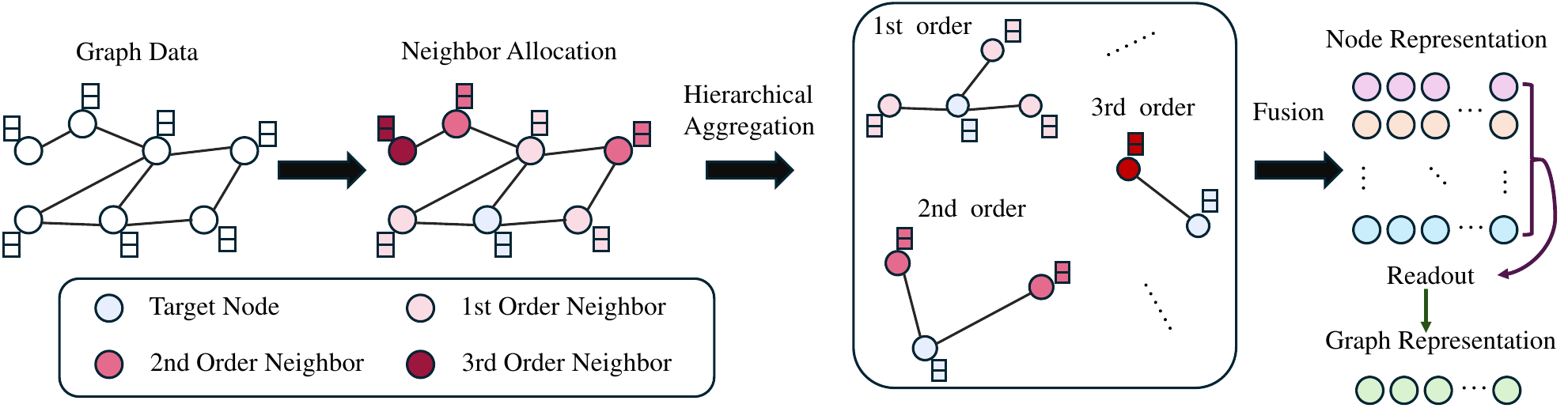}\\
	\caption{An overview of the proposed RF-GCN framework includes detailing the assignment of neighborhoods and hierarchical aggregation of target nodes, followed by the PPR fusion of the resulting representations.}
	\label{Framework}
\end{figure*}

The main contributions of our work are as follows:
\begin{enumerate}
\item  Theoretical analysis of GCNs reveals that over-utilizing low-order neighbors diminishes self-message attention, with persistent over-aggregation due to an unchanged aggregation mechanism despite increased attention by utilizing residuals.
\item Motivated by the previous analysis, we propose a deep GNN framework called RF-GCN, in which neighbors are hierarchically aggregated so that the information of each neighbor is utilized only once, effectively avoiding over-aggregation.
\item The proposed model demonstrates excellent performance in experiments conducted on 16 real-world datasets on node- and graph-classification tasks.
\end{enumerate}

The rest of the paper is organized as follows: Section \ref{SEC2} reviews recent advances in GCNs and explores strategies to mitigate the over-smoothing issues. Section \ref{SEC3} analyzes existing models and introduces the RF-GCN model. Section \ref{SEC4} assesses the performance of RF-GCN through extensive experimental evaluations across various settings. Finally, Section \ref{SEC5} provides a summary of our findings and conclusions.

\section{Related Work} \label{SEC2}
In this section, we begin with an introduction to graph convolutional networks and subsequently address the issue of over-smoothing.

\subsection{Graph Convolutional Networks}
GCNs are derived from the first-order truncation of ChebyNet \cite{Chebynet}.
As GCNs have proven effective in embedding graph-structured data and have gained widespread attention, numerous variants have emerged to address various scenarios.
SGC \cite{SGC} simplified GCNs by removing nonlinearities and integrating the weight matrix.
GAT \cite{GAT} was utilized for induction and transformation due to its ability to assign unique weights to nodes within the neighborhood without relying on the pairwise graph structure.
GIN \cite{GIN} made GNN as powerful as the Weisfeiler-Lehman graph isomorphism test by modeling injective functions.
LD2 \cite{LD2}  decoupled graph propagation from the generation of expressive embeddings to streamline the learning process, achieving optimal training time complexity with a memory footprint that remains constant regardless of graph size.
LGCN-SGIB \cite{zhong2023learnable} employed dual GCN-based meta-channels to explore both local and global relations and incorporated graph information bottleneck to minimize noisy data interference.
Graphchef \cite{graphchef} produced a set of rules, distinguishing it from existing GNNs and explanation methods that focus on reasoning with individual graphs.
GADAM \cite{chen2024boosting} obtained local anomaly scores conflict-freely and utilized hybrid attention-based adaptive messaging to enable nodes to absorb abnormal or normal signals selectively.

Although these GCN approaches have demonstrated promising performance, they experience diminishing returns or even adverse effects as they progress to deeper layers.

\subsection{Over-smoothing Issue}
The over-smoothing issue refers to the situation when the representation of the nodes in GNNs becomes indistinguishable after multiple aggregation.
Various methods have been proposed to address this problem.
Existing solutions are broadly categorized into three types. 
The first method involves dropping edges \cite{DropEdge, SAEDGEDROP}, while the second imposes regularization constraints on representations \cite{PairNorm, guo2023contranorm}.
 However, both methods primarily mitigate the phenomenon and cannot prevent representations from becoming indistinguishable in deep configuration GCN. 
The third method is adding residuals which is the most widespread and has shown promising results in deep layers.
APPNP \cite{APPNP} achieved this by adding a starting feature matrix at each step of the aggregation process. 
GCNII \cite{GCNII} enhanced APPNP by introducing identity mapping and nonlinearity. 
Ordered GNN \cite{OrderGNN} organized the messaging in node representations by establishing specific blocks of neurons for messaging within a defined number of hops.
NSD \cite{bodnar2022neural} learned sheaf from data and the resulting layer diffusion model had many desirable properties to address the limitations of the classical graph diffusion equation and obtained competitive results in heterogeneous environments.
FLODE  \cite{maskey2023a} introduced a fractional graph Laplace neural ODE, demonstrating that this method effectively propagates information between distant nodes while maintaining a low probability of distant jumps.
GNN-PDE-COV \cite{dan2023re} introduced a selective generalization bias to address the over-smoothing issue in GNNs and developed a novel model to predict in-vivo pathology flow using longitudinal neuroimages.
AGNN \cite{AGNN} mitigated the over-smoothing problem by periodically projecting from low-order feature space to high-order space and utilizing an enhanced Adaboost strategy to aggregate the outputs of each layer.
FROND \cite{kang2024unleashing} employed Caputo fractional derivatives to leverage the non-local properties of fractional calculus, capturing long-term dependencies in feature updates and enhancing graph representation learning, thereby addressing the issue of over-smoothing.

While these methods can construct deep GCNs, they often incorporate components from other fields without addressing the underlying causes of over-smoothing in message passing.

\section{The Proposed Method} \label{SEC3}
As depicted in Figure \ref{Framework}, RF-GCN functions by hierarchical organizing the neighbors of the graph and aggregating them, thus tying the number of network layers directly to the graph data alone, preventing redundant aggregation operations. 
In Section \ref{SECA}, we first introduce the mathematical notation employed. 
Following this, in Section \ref{SECB}, we analyze the causes of over-aggregation and over-smoothing from a mathematical standpoint. 
In Section \ref{SECC}, we present the RF-GCN model. 
Finally, in Section \ref{SECD}, we evaluate the proposed method including the time complexity and over-aggregation analysis.

\subsection{Mathematical Notations} \label{SECA}

We begin by introducing the notation used throughout this paper. Let $\mathcal{G} = \{\mathcal{V}, \mathcal{E}\}$ be a graph, where $\mathcal{V} = \{v_1, \cdots, v_N\}$ represents the set of nodes with $N$ nodes. Denote $\mathbf{A}$ be the adjacency matrix, where $\mathbf{A}_{ij} = 1$ if there is an edge  $(i,j) \in \mathcal{E}$. 
Each node possesses a $d$-dimensional vector $\mathbf{x}_i\in \mathbb{R}^d$, and we define the feature matrix as $\mathbf{X} = [\mathbf{x}_1; \cdots; \mathbf{x}_N]$. 
The symmetrical normalized adjacency matrix of the graph is defined as $ \hat{\mathbf{A}} = \tilde{\mathbf{D}}^{-\frac{1}{2}}\tilde{\mathbf{A}}\tilde{\mathbf{D}}^{-\frac{1}{2}}$, where $\tilde{\mathbf{A}} = \mathbf{A} + \mathbf{I}$ and $\tilde{\mathbf{D}}_{ii} =\sum_j \tilde{\mathbf{A}}_{ij}$.
To enhance the clarity of the paper, we summarize the various symbols and their meanings in Table \ref{SymbolicNormalization}.

\begin{table}[!htbp]
	\centering
	\caption{Symbolic notations with their descriptions.}
	 \label{SymbolicNormalization}
	 \begin{tabular}{l||l}
	  \toprule {Notations} & {Descriptions} \\ \midrule
	  {$\mathbf{X}$}   &  The  feature matrix with $N \times d$.\\
   	{$\mathbf{A}$} &   The adjacency matrix with $N \times N$.\\
	  {$\hat{\mathbf{A}}$} &   The symmetrically normalised adjacency matrix.\\
         {$\mathbf{D}$} &   The degree matrix with $N \times N$.\\
	  {$\mathbf{Q}$}   &  The orthogonal eigenmatrix.\\
	  {$\mathbf{Y}$}   &The label matrix with $N \times c$.\\
   	{$\hat{\mathbf{Y}}$}   & The predictive label matrix with $N \times c$.\\
	  {$L$ }         & The number of layers.\\ 
   	{$\mathbf{H}$ }         & The  graph representation with $N \times d$.\\ 
       {$\mathbf{H}^{(k)}$ }         & The  graph representation in layer $k$.\\ 
        {$\mathbf{C}_l$ }         & The $l$-layer cumulative adjacency matrix.\\ 
   	{$\mathbf{T}_l$ }         & The $l$-layer redundancy-free adjacency matrix.\\ 
   	{$\{\theta_l\}_{l=1}^L$}         & The set of PPR coefficient.\\ 
   	{$\mathbf{\Lambda}$}         & The eigenvalues matrix.\\ 
	  {$\sigma(\cdot)$}         &  The optional activation function.\\ 
	  \bottomrule
	\end{tabular}
   \end{table}
\subsection{Theoretical Analysis} \label{SECB}

We first quantify the extent of neighbor utilization in the GCN aggregation process by defining the average number of neighbor aggregations required to capture global information. This metric is computed as follows:
\begin{equation}
    \mathrm{AvgNAT}(\mathbf{A},L) =\frac{1}{N}\sum_{i=1}^N\sum_{j=1}^N \mathbf{A}^L_{ij} ,
\end{equation}
$\mathrm{AvgNAT}(\mathbf{A}, L)$ measures the average number of neighbor aggregations required to obtain the global information.
\begin{theorem} \label{overaggregate}
For a connected graph $G$ with $N$ nodes, $N-1 \leq \mathrm{AvgNAT}(\mathbf{A}, L) < 2^{(N-2)}$. 
\end{theorem}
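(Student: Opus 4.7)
The plan is to bound $\mathrm{AvgNAT}(\mathbf{A},L)=\frac{1}{N}\mathbf{1}^{\top}\mathbf{A}^{L}\mathbf{1}$ from both sides, exploiting the standard identity that $(\mathbf{A}^{L})_{ij}$ counts the number of length-$L$ walks from $i$ to $j$. For the lower bound, I would invoke the combination of connectedness with the choice $L\geq\mathrm{diam}(\mathcal{G})$ (the regime in which $L$ hops are just enough to capture global information, as stated earlier in the paper). For any ordered pair $(i,j)$ with $i\neq j$ there is a shortest path of length $d(i,j)\leq L$; padding this path into a walk of length exactly $L$ by bouncing back and forth along an incident edge (or inserting self-loops when the convention $\tilde{\mathbf{A}}=\mathbf{A}+\mathbf{I}$ is intended) shows that $(\mathbf{A}^{L})_{ij}\geq 1$. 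Summing these $N(N-1)$ off-diagonal contributions and dividing by $N$ yields $\mathrm{AvgNAT}(\mathbf{A},L)\geq N-1$.

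For the upper bound, the plan is to bound the total number of length-$L$ walks combinatorially. First I would reduce to a spanning tree $T$ of $\mathcal{G}$, or, more broadly, argue that the tree-like regime is the extremal case for the counting of interest after a suitable normalization. In a tree on $N$ vertices any length-$L$ walk starting from a fixed vertex is determined by a sequence of local ``towards-parent / towards-child'' decisions; since such a walk can touch at most $N-1$ distinct vertices and its internal branching structure is controlled by at most $N-2$ binary choices, the number of walks from each starting vertex is strictly less than $2^{N-2}$, and averaging over the $N$ choices of starting vertex preserves the strict inequality.

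The main obstacle is the upper bound. The lower bound is essentially a reachability/padding argument once $L$ is at least the diameter. The upper bound, by contrast, requires a combinatorial encoding of walks tight enough to yield the exponential base $2$ rather than the crude $(N-1)$ obtainable from the maximum-degree estimate $\|\mathbf{A}^{L}\mathbf{1}\|_{\infty}\leq(N-1)^{L}$, and also to guarantee the strict inequality. Controlling both the base of the exponent and excluding the equality case simultaneously — in particular handling dense graphs where many walks can coincide and showing that no extremal configuration saturates $2^{N-2}$ — is the delicate part of the argument; a spectral rewriting $\mathbf{1}^{\top}\mathbf{A}^{L}\mathbf{1}=\sum_{k}\lambda_{k}^{L}(\mathbf{q}_{k}^{\top}\mathbf{1})^{2}$ via the eigenmatrix $\mathbf{Q}$ may serve as a useful backup to the combinatorial route.
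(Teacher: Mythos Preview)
Your route is very different from the paper's. The paper does not attempt a proof for general connected graphs at all: it simply evaluates the complete graph $K_{N}$ (diameter $L=1$, giving $\mathrm{AvgNAT}=N-1$) for the lower end and asserts that the chain $P_{N}$ (diameter $L=N-1$) realises the upper end $2^{N-2}$, implicitly treating these two examples as the extremal cases. There is no argument that an arbitrary connected $G$ is sandwiched between them; you are therefore trying to prove strictly more than the paper does.

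Your plan has genuine gaps on both sides. For the lower bound, the padding step breaks on parity: bouncing along an edge increases walk length by $2$, so when $L-d(i,j)$ is odd (unavoidable in any bipartite graph) there is no length-$L$ walk from $i$ to $j$ and $(\mathbf{A}^{L})_{ij}=0$. The statement uses $\mathbf{A}$, not $\tilde{\mathbf{A}}=\mathbf{A}+\mathbf{I}$, so self-loops are not available as an escape hatch. For the upper bound, restricting to a spanning tree can only \emph{decrease} the walk count, so it gives no upper estimate for the original graph; the ``$N-2$ binary choices'' encoding is also not a bijection with length-$L$ walks in a tree. More seriously, the stated strict upper bound is not even true for arbitrary connected graphs: take $K_{4}$ with one edge deleted, so $N=4$, $L=2$, and a direct computation gives $\mathrm{AvgNAT}(\mathbf{A},2)=\tfrac{1}{4}\sum_{i,j}(\mathbf{A}^{2})_{ij}=\tfrac{26}{4}=6.5>4=2^{N-2}$. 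Since the target inequality fails in general, neither your combinatorial encoding nor the spectral rewriting can be pushed through; the paper's two-example ``proof'' sidesteps this only by never addressing graphs other than $K_{N}$ and $P_{N}$.
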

\begin{proof}
In a complete graph, the diameter is $1$, so $\mathrm{AvgNAT}(\mathbf{A},1) = N-1$. 
In a chain graph, the diameter is $N-1$, and $\mathrm{AvgNAT}(\mathbf{A}, N-1$)  upper bound of $2^{(N-2)}$. Then we have $ N-1 \leq \mathrm{AvgNAT}(\mathbf{A}, L) < 2^{(N-2)}$. 
\end{proof}
Theorem \ref{overaggregate}  demonstrates that existing message aggregation mechanisms will inevitably over-aggregate low-order information when required to obtain information about the entire graphs. 
Consequently, this may lead to a diminished focus on the nodes' self-information.

We then quantify the extent to which the GCN aggregation process emphasizes its features by defining the node self-attention score as follows:
\begin{equation}
    \mathrm{SAS}(\mathbf{A},K) = \frac{1}{N} \sum_{i=1}^N \frac{\mathbf{A}_{ii}^K}{\sum_{j=1}^N\mathbf{A}_{ij}^K},
\end{equation}
$\mathrm{SAS}(\mathbf{A}, K)$  measures how intensely the GCN focuses on the  self-information 
 of the nodes when accessing $K$-hop neighbor information.
A higher $\mathrm{SAS}(\mathbf{A}, K)$ value indicates increased attention to the node's self-information, whereas a lower $\mathrm{SAS}(\mathbf{A}, K)$  indicates reduced focus.
We then rigorously characterize the over-weakening of the self-attention score resulting from two typical propagation mechanisms.

\begin{theorem} \label{SGCoversmothing}
Considering a connected graph $\mathcal{G} = (\mathbf{A}, \mathbf{X})$ and the message passing process $\hat{\mathbf{A}}^k \mathbf{X}$, where $\hat{\mathbf{A}}=\mathbf{D}^{-\frac{1}{2}}\tilde{\mathbf{A}}\mathbf{D}^{-\frac{1}{2}}$ or  $\hat{\mathbf{A}}=\mathbf{D}^{-1}\tilde{\mathbf{A}}$, we have $\lim_{k \to \infty}\mathrm{SAS}(\hat{\mathbf{A}}, k) = \frac{1}{N}$.
\end{theorem}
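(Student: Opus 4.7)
The plan is to reduce the theorem to the standard Perron--Frobenius fact that $\hat{\mathbf{A}}^k$ converges to a rank-one matrix, then substitute the limit into the definition of $\mathrm{SAS}$ and let the degree factors cancel under the outer average. The critical structural ingredient is that the self-loops in $\tilde{\mathbf{A}} = \mathbf{A}+\mathbf{I}$ make both normalizations primitive on a connected graph: the operator is irreducible and aperiodic, so its spectral radius $1$ is a simple eigenvalue and every other eigenvalue lies strictly inside the unit disk.

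For the symmetric normalization, I would first verify that $\tilde{\mathbf{D}}^{1/2}\mathbf{1}$ is the top eigenvector of $\hat{\mathbf{A}}$ with eigenvalue $1$, after which the spectral decomposition yields
\begin{equation*}
\hat{\mathbf{A}}^k \;\longrightarrow\; \frac{\tilde{\mathbf{D}}^{1/2}\mathbf{1}\mathbf{1}^{\top}\tilde{\mathbf{D}}^{1/2}}{\mathbf{1}^{\top}\tilde{\mathbf{D}}\mathbf{1}}
\end{equation*}
as $k \to \infty$, i.e., $\hat{\mathbf{A}}^k_{ij}\to \sqrt{\tilde{d}_i\tilde{d}_j}/Z$ with $Z=\sum_\ell \tilde{d}_\ell$. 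Substituting into the $\mathrm{SAS}$ formula, the row-sum limit becomes $\sqrt{\tilde{d}_i}\sum_j\sqrt{\tilde{d}_j}/Z$ while the diagonal limit is $\tilde{d}_i/Z$, so the inner ratio simplifies to $\sqrt{\tilde{d}_i}/\sum_j\sqrt{\tilde{d}_j}$. Summing over $i$ gives $1$, and multiplying by $1/N$ produces the claimed limit $1/N$.

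For the row-stochastic normalization $\hat{\mathbf{A}} = \tilde{\mathbf{D}}^{-1}\tilde{\mathbf{A}}$, I would invoke the ergodic theorem for irreducible aperiodic Markov chains to obtain $\hat{\mathbf{A}}^k \to \mathbf{1}\bm{\pi}^{\top}$ with stationary distribution $\pi_j = \tilde{d}_j/Z$. Because every row of $\hat{\mathbf{A}}^k$ already sums to $1$ exactly for every $k$, the denominator in the inner ratio is constant and the ratio itself tends to $\pi_i$; averaging then gives $\frac{1}{N}\sum_i\pi_i = 1/N$ as required.

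The main obstacle is not algebraic but establishing the spectral-convergence step cleanly. One must rule out a secondary eigenvalue of modulus one, and the reason it cannot occur is precisely that $\tilde{\mathbf{A}}$ carries self-loops, which both break bipartiteness (so $-1$ is not an eigenvalue of the symmetric form) and guarantee aperiodicity (so the random-walk form is primitive). Once that is in place, the remaining work is the short cancellation in which the $\tilde{d}_i$ or $\sqrt{\tilde{d}_i}$ factors arising from the rank-one limit exactly balance the row-sum normalization, so the outer average $\frac{1}{N}\sum_i$ collapses to $1/N$.
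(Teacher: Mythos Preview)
Your proposal is correct and follows essentially the same route as the paper: both cases reduce to the Perron--Frobenius convergence $\hat{\mathbf{A}}^k\to\mathbf{q}\mathbf{q}^\top$ (respectively $\mathbf{1}\bm{\pi}^\top$), after which the diagonal-to-row-sum ratio becomes $q_i/\sum_j q_j$ (respectively $\pi_i$) and the outer average collapses to $1/N$. Your version is in fact a bit more complete than the paper's, since you identify the Perron vector explicitly as $\tilde{\mathbf{D}}^{1/2}\mathbf{1}$ and justify the strict spectral gap via the self-loops, whereas the paper simply asserts that $|\lambda_i|<1$ for $i\neq j$.
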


\begin{proof}
    Given a symmetric matrix $\hat{\mathbf{A}}=\mathbf{D}^{-\frac{1}{2}}\tilde{\mathbf{A}}\mathbf{D}^{-\frac{1}{2}}$, it can be eigen-decomposed as $\hat{\mathbf{A}} = \mathbf{Q}\mathbf{\Lambda}\mathbf{Q}^T$, where  $\mathbf{Q}$ is an orthogonal eigen-matrix and $\mathbf{\Lambda}$  is a diagonal matrix whose diagonal entries are the eigenvalues of $\hat{\mathbf{A}}$.
    The expression for $\hat{\mathbf{A}}^{k}$  can be derived as follows:
\begin{equation}  
    \begin{aligned}
    \hat{\mathbf{A}}^k =& \mathbf{Q}\mathbf{\Lambda}\mathbf{Q}^T \cdots \mathbf{Q}\mathbf{\Lambda}\mathbf{Q}^T = \mathbf{Q}\mathbf{\Lambda}^{k}\mathbf{Q}^T \\
    =& \begin{bmatrix}
        \mathbf{q}_1, & \mathbf{q}_2 ,&\cdots, & \mathbf{q}_N 
        \end{bmatrix} 
        \begin{bmatrix}
        \lambda_1^k\mathbf{q}_1^T\\  \lambda_2^k \mathbf{q}_2^T\\ \vdots \\   \lambda_n^k\mathbf{q}_N^T 
        \end{bmatrix}
    =  \sum_{i=1}^N\lambda_i^k\mathbf{q}_i\mathbf{q}_i^T.
    \end{aligned}
\end{equation}
Given that the symmetrically normalized adjacency matrix $\hat{\mathbf{A}}$ has one eigenvalue  $\lambda_j = 1$, and all other eigenvalues satisfy the condition $|\lambda_i| < 1 ~(i \neq j)$.
Therefore, its limiting form can be expressed as follows:
\begin{equation}  
    \begin{aligned}
\lim_{k \to \infty}\hat{\mathbf{A}}^k =&\mathbf{q}_j\mathbf{q}_j^T
= \begin{bmatrix}
        q_{j1}^2 & \cdots& q_{j1}q_{jN} \\
        \vdots & \ddots & \vdots  \\ 
        q_{jN}q_{j1}& \cdots& q_{jN}^2
    \end{bmatrix} .
    \end{aligned}
\end{equation}
For any $m$, the following results can be obtained:
\begin{equation}  \label{redistribution}
    \begin{aligned}
 \lim_{k \to \infty}\frac{\hat{\mathbf{A}}_{mm}^k}{\sum_{n=1}^N\hat{\mathbf{A}}_{mn}^k}  = \frac{q_{jm}}{ q_{j1}  + \cdots + q_{jN}}.
    \end{aligned}
\end{equation}
\begin{equation}
\begin{aligned}
        &\lim_{k \to \infty}\mathrm{SAS}(\hat{\mathbf{A}}, k) =  \lim_{k \to \infty} \frac{1}{N} \sum_{i=1}^N \frac{\hat{\mathbf{A}}_{ii}^k}{\sum_{j=1}^N\hat{\mathbf{A}}_{ij}^k}\\
        &=\frac{1}{N}(\frac{q_{j1}}{ q_{j1}  + \cdots + q_{jN}} + \cdots + \frac{q_{jN}}{ q_{j1}  + \cdots + q_{jN}}) = \frac{1}{N}.
\end{aligned}
\end{equation}
When considering another case, i.e., $\hat{\mathbf{A}}=\mathbf{D}^{-1}\tilde{\mathbf{A}}$, where the sum of each row equals 1, $\mathrm{SAS}(\hat{\mathbf{A}}, k)$ can be written as:
\begin{equation}
    \mathrm{SAS}(\hat{\mathbf{A}},k) = \frac{1}{N} \sum_{i=1}^N \hat{\mathbf{A}}_{ii}^k.
\end{equation}
Under this setting, the message aggregation process can be viewed as a Markov chain with a state transfer matrix $\hat{\mathbf{A}}$.
Given that the graph $\mathcal{G}$ is connected and includes self-loops, the Markov chain is both irreducible and aperiodic. 
As $k$ approaches infinity $ \hat{\mathbf{A}}^k$ converges to a steady-state distribution $\mathbf{P}$, where all elements of $\mathbf{P}$ are equal.
 Therefore, the following results can be obtained:
 \begin{equation}
   \lim_{k \to \infty}\hat{\mathbf{A}}^k = \mathbf{P}=
   \begin{bmatrix}
        p_{1} & \cdots& p_{N} \\
        \vdots & \ddots & \vdots  \\ 
         p_{1} & \cdots& p_{N}
    \end{bmatrix},
\end{equation}
where  $ p_{1} + \cdots + p_{N} =1 $. Thus, we have
\begin{equation}
    \lim_{k \to \infty}\mathrm{SAS}(\hat{\mathbf{A}}, k) = \frac{1}{N}(p_1+ \cdots+p_N) =  \frac{1}{N}.
\end{equation}
This completes the proof.
\end{proof}
The theorem reveals that as the model integrates information from higher-order neighbors, there is a gradual neglect of the node's self-information.
Theoretically, this is shown to lead to over-smoothing in deep configuration, where node features across the network become increasingly homogenized and indistinguishable. 
Thus, developing strategies to effectively capture higher-order neighbor information without compromising the distinctiveness of individual node representations is essential in overcoming the over-smoothing challenge.

The existing method of increasing the attention of node self-information by adding initial residuals operates $\beta \hat{\mathbf{A}}+ (1-\beta)\mathbf{I}$, where $\beta$ is a hyperparameter ranging from 0 to 1.
This approach increases the values of the diagonal elements of the adjacency matrix while weakening the values of the non-diagonal elements, thus enhancing the focus on the node's self-information. 
With the initial residual connections, we have
\begin{equation}
   \mathrm{SAS}(\beta \hat{\mathbf{A}}+ (1-\beta)\mathbf{I}, k) > \mathrm{SAS}(\hat{\mathbf{A}},k).
\end{equation}

However, this approach still suffers from the phenomenon of over-aggregate of low-order neighbors because it does not substantially change the operation of message aggregation.

\subsection{Redundancy-Free Graph Convolutional Network} \label{SECC}
Based on the observations above, we can draw the conclusion that the forward propagation in a $k$-step linear graph convolution, regardless of the layer numbers, can be expressed as a linear combination of coefficient redistribution. Namely,
\begin{equation}
    \hat{\mathbf{A}}^k\mathbf{X} =  \mathbf{H}=
    \begin{bmatrix}
        a_{11} & \cdots& a_{1N} \\
        \vdots & \ddots & \vdots  \\ 
        a_{N1}& \cdots& a_{NN}
    \end{bmatrix}
    \cdots
    \begin{bmatrix}
        \mathbf{x}_1  \\ \vdots \\  \mathbf{x}_N
    \end{bmatrix} .
\end{equation}
\begin{equation}
\begin{aligned}
        \mathbf{h}_i =  \alpha_{i1} \mathbf{x}_1 + \cdots + \alpha_{in} \mathbf{x}_N,
\end{aligned}
\end{equation}
where $\{\alpha_{ij}\}_{j=1}^N$ is the set of coefficients after merging.
We group all nodes based on the shortest distance from the node $i$ and rewrite the above equation as
\begin{equation}
     \mathbf{h}_i = \sum_{j_1 \in \mathcal{N}_1}\alpha_{ij_1} \mathbf{x}_{j_1} + \cdots + \sum_{j_L \in \mathcal{N}_L} \alpha_{ij_L} \mathbf{x}_{j_L},
\end{equation}
where $\mathcal{N}_{il}$ represents the set of $l$-order neighbors of the node $i$ and $\mathcal{N}_{i1} \cap \mathcal{N}_{i2}  \cap \cdots \cap  \mathcal{N}_{iL_{i}} = \phi$, where $L_{i}$ represents the maximum value of the shortest distance from the node $i$ to other nodes.
We define the set of cumulative adjacency matrices and redundancy-free adjacency matrices as 
$\{\mathbf{C}_l\}_{l=1}^{L}$ and $\{\mathbf{T}_l\}_{l=1}^{L}$ according to the neighbors set $\{\mathcal{N}_l\}_{l=1}^{L}$, where they are calculated as 
\begin{equation}\label{culmulate}
     \mathbf{C}_l  =f (\mathbf{A}^l + \cdots +\mathbf{A}),
\end{equation}
 where  $f(\cdot)$ converts each non-zero element of the matrix to 1, and 0 otherwise.
$\mathbf{C}_l$ is a matrix that records connections to nodes at $l$-hops and within $l$-hops, with its values are computed as below:
\begin{equation}
     \mathbf{C}_l [i,j] = \left\{
        \begin{aligned}
             & 1 ~~~~~~ dis(i,j)\leq l, \\
             & 0 ~~~~~~ otherwise,
        \end{aligned}
     \right.
\end{equation}
where $dis(i,j)$ denotes the shortest distance between nodes $i$ and $j$.
Then the redundancy-free adjacency matrix $\mathbf{T}_l$ can be calculated through the set of cumulative adjacency matrices:
\begin{equation} \label{pureadj}
    \mathbf{T}_l  = \mathbf{C}_l - \mathbf{C}_{l-1}.
\end{equation}
A symmetric normalization strategy is applied to each redundancy-free matrix:
\begin{equation}
     \hat{\mathbf{T}_l} = \tilde{\mathbf{D}}^{-\frac{1}{2}}_l \tilde{\mathbf{T}_l}\tilde{\mathbf{D}}^{-\frac{1}{2}}_l,
\end{equation}
where $\tilde{\mathbf{T}_l} = \mathbf{T}_l + \mathbf{I}$.
We then fuse these redundancy-free adjacency matrices by applying different coefficients to obtain the final hierarchical sample dependencies, i.e.,
\begin{equation}\label{fused}
\begin{aligned}
\hat{\mathbf{T}} = \sum_{l=1}^L \theta_l \hat{\mathbf{T}}_l,
\end{aligned}
\end{equation}
where the coefficients $\{\theta_l\}_{l=1}^L$ is
\begin{equation}
    \theta_l^{\mathrm{PPR}} = (1-\frac{1}{\alpha})^l.
\end{equation}
Here, the coefficient $\{\theta_l^{\mathrm{PPR}}\}_{l=1}^{L}$ exhibits a gradual decrease, aiming to focus on neighboring nodes within closer hops primarily. 

The final node representations are computed by passing the aggregated features through a Multi-Layer Perceptron (MLP), followed by a softmax layer for classification:
\begin{equation}\label{finalrepresent}
\begin{aligned}
 &\mathbf{Z}= \mathrm{MLP} (\mathbf{X}),\\
&\mathbf{S} = \hat{\mathbf{T}}\mathbf{Z},\\
&\mathbf{H}^L  = \mathrm{MLP} (\mathbf{S}),\\
&\hat{\mathbf{Y}} =\mathrm{Softmax}(\mathbf{H}^L ),
\end{aligned}
\end{equation}
where $\hat{\mathbf{Y}}$ represents the predictive results.
We utilize cross-entropy loss for the loss function, which is expressed as follows:
\begin{equation} \label{loss}
\begin{aligned}
\mathcal{L} = -\sum_{i \in \Omega}\sum_{j = 1}^C \mathbf{Y}_{ij} ln(\hat{\mathbf{Y}}_{ij}),
\end{aligned}
\end{equation}
where $\mathbf{Y}$ is the label matrix,  $\Omega$ is the index set of labeled data and $C$ is the number of classes.

\subsection{Model Analysis} \label{SECD}
We also analyze the proposed model, which generally consists of two parts: the first involves constructing redundant-free adjacency matrices, and the second involves the forward propagation of GCNs.
The  calculation of $\mathbf{T}$ has a computational complexity of $\mathcal{O}(LN^2)$  while the complexity for computing $\hat{\mathbf{Y}}$ is $\mathcal{O}(N^2d+Nd^2)$.
Therefore, the total time complexity is $\mathcal{O}(N^2 (L+d)+Nd^2)$.

In addition, we analyze the average number of neighbor aggregations required for the proposed model to effectively capture global information. The specifics are outlined as follows:

\begin{equation}
\begin{aligned}
        \lim_{k \to \infty}\mathrm{SAS}(\mathbf{T}, k) =& \frac{1}{N}  \lim_{k \to \infty}
\sum_{i=1}^N\frac{\mathbf{T}_{ii}}{\sum_{j=1}^N\mathbf{T}_{ij}}\\
=&\frac{1}{N}  \lim_{k \to \infty}\sum_{i=1}^N \frac{\sum_{l=1}^k \theta_l \hat{\mathbf{T}}_{l,ii}}{\sum_{j=1}^N\sum_{l=1}^k \theta_l \hat{\mathbf{T}}_{l,ij}}\\
>& \frac{1}{N} \times N \times \epsilon = \epsilon,
\end{aligned}
\end{equation}
where $\epsilon = \min\{\frac{\hat{\mathbf{T}}_{l,ii}}{\sum_{j=1}^N \theta_l \hat{\mathbf{T}}_{l,ij}}\}_{l=1}^k \geq \frac{1}{N}$. 
The results indicate that the self-attention score of the proposed method is higher than GCN. Furthermore, we analyze the average neighbors aggregation times of the proposed methods. Namely,
\begin{equation}
    \mathrm{AvgNAT}(\mathbf{T},L) =\frac{1}{N}\sum_{i=1}^N\sum_{j=1}^N \mathbf{T}_{ij} = N-1,
\end{equation}
which demonstrates that the proposed method needs fewer aggregations to obtain the global information of the entire graph.
\begin{algorithm}[t]
  \caption{RF-GCN}
  \textbf{Input}: Graph data $\mathcal{G} = (\mathbf{X}, \mathbf{A})$, label matrix $\mathbf{Y}$, and the hyperparameter $\alpha$.  \\
  \textbf{Output}: Predictive label $\hat{\mathbf{Y}}$.
  \begin{algorithmic}[1]\label{Alg}
  \STATE {Initialize the weight matrix $\mathbf{W}$;}
  \STATE{Initialize the trainable parameters of  $\mathrm{MLP}(\cdot)$; }
    \STATE {Compute the cumulative adjacency
matrices $\{\mathbf{C}_l\}_{l=1}^L$ by Equation \eqref{culmulate};}
  \STATE {Calculate the redundancy-free adjacency matrices $\{\mathbf{T}_l\}_{l=1}^L$ by Equation \eqref{pureadj};}
  \STATE {Compute the fused redundancy-free adjacency matrix  $\mathbf{T}$ by Equation \eqref{fused};}
  \WHILE {not converge}
        \STATE {Calculate $\hat{\mathbf{Y}}$ by Equation \eqref{finalrepresent};}
        \STATE {Compute loss $\mathcal{L}$ by Equation \eqref{loss}}
        \STATE {Update the parameters set  of the $\mathrm{MLP}(\cdot)$ with backward propagation;}
  \ENDWHILE\\
  \RETURN {Predictive label $\hat{\mathbf{Y}}$.}
  \end{algorithmic}
\end{algorithm}

\section{Experiments} \label{SEC4}
In this section, we evaluate the proposed method across various graph datasets, including homogeneous graph node classification, heterophilous graph node classification, and graph classification. For each task, we compare the proposed method with a counterpart specifically designed for that particular task.
More comprehensive experiments including
parameter sensitivity, ablation study, and convergence analysis are conducted to analyze the effectiveness and superiority of the proposed method.

\subsection{Datasets}
We selected sixteen benchmarks for experiments: (1) three citation networks — Cora, Citeseer, and Pubmed;  (2) two social networks — Flickr and BlogCatalog; (3) a paper network - ACM; (4) four heterophily datasets - Cornell, Tesax, Wisconsin and Film.  (5) six graph-level datasets - IMDB-BINARY, IMDB-MULTI, COLLAB, MUTAG, PROTEINS, and PTC.
A brief dataset presentation is shown in Tables \ref{statistics} and \ref{graphstatistics}.

\begin{table}[!htbp]
  \centering
      \caption{A brief description of the node classification datasets.}
    \begin{tabular}{l||cccc}
    \toprule
    Dataset & \# Nodes  & \# Edges  & \# Classes  & \# Features \\
    \midrule
    Cora  & 2,708 & 5,429 & 7     & 1,433 \\
    Citeseer & 3,327 & 4,732 & 6     & 3,703 \\
    Pubmed & 19,717 & 44,338 & 3     & 500 \\
    ACM   & 3,025 & 13,128 & 3     & 1,870 \\
    BlogCatalog & 5,196 & 171,743 & 6     & 8,189 \\
    Flickr & 7,575 & 239,738 & 9     & 12,047 \\
    \midrule
    Cornell & 183 & 295 & 5     & 1,703 \\
    Texas & 183 & 309 & 5     & 1,703 \\
    Wisconsin & 251 & 499 & 5     & 1,703 \\
    Film & 7,600 & 33,544 & 5     & 931 \\
    \bottomrule
    \end{tabular}%
  \label{statistics}%
\end{table}%
\begin{table}[htbp]
  \centering
  \caption{A brief description of the graph classification datasets.}
    \begin{tabular}{l||cccc}
    \toprule
    Dataset & \multicolumn{1}{l}{\# Graphs} & \multicolumn{1}{l}{\# Classes} & \multicolumn{1}{l}{Avg. \# Nodes} & \# Data types \\
    \midrule
    IMDB-B & 1,000 & 2     & 19.8  & Movie \\
    IMDB-M & 1,500 & 3     & 13    & Movie \\
    COLLAB & 5,000 & 3     & 74.5  & Scientific \\
    MUTAG & 188   & 2     & 17.9  & Bioinformatics \\
    PROTEINS & 1,113 & 2     & 39.1  & Bioinformatics \\
    PTC   & 344   & 2     & 25.5  & Bioinformatics \\
    \bottomrule
    \end{tabular}%
  \label{graphstatistics}%
\end{table}%

\begin{itemize}
    \item \textbf{Cora, Citeseer, and Pubmed}\footnote{\href{https://github.com/shchur/gnn-benchmark\#datasets}{https://github.com/shchur/gnn\-benchmark\#datasets}} are three citation networks for research papers, where nodes represent publications and edges denote citation links. Node attributes consist of bag-of-words representations of the papers.
    \item \textbf{ACM}\footnote{\href{https://github.com/Jhy1993/HAN}{https://github.com/Jhy1993/HAN}} is a paper network where nodes represent papers connected by edges if two papers share the same author. The network is characterized by features that include bag-of-words representations of paper keywords.
    \item \textbf{BlogCatalog}\footnote{\href{https://networkrepository.com/soc-BlogCatalog.php}{https://networkrepository.com/soc-BlogCatalog.php}} is a social network where nodes represent users and edges represent their relationships, with labels indicating the topic categories provided by the users.
    \item \textbf{Flickr}\footnote{\href{https://github.com/xhuang31/LANE}{https://github.com/xhuang31/LANE}} is a social network derived from an image and video hosting website. In this network, nodes denote users, edges denote their relationships, and all nodes are categorized into nine classes.
    \item \textbf{Cornell, Texas, and Wisconsin}\footnote{\href{https://github.com/alexfanjn/GeomGCN_PyG}{https://github.com/alexfanjn/GeomGCN\_PyG}\label{Gemodata}} are three sub-datasets. Nodes represent web pages, and edges indicate hyperlinks between them. The node features consist of the bag-of-words representation of web pages. These web pages are manually categorized into five groups: student, project, course, staff, and faculty.
    \item \textbf{Film}\textsuperscript{\ref {Gemodata}} represents the actor-only subgraph of the film-director-actor-writer network.
Each node represents an actor, and the edge between them indicates their co-occurrence on the same Wikipedia page.
     \item  \textbf{IMDB-BINARY} and \textbf{IMDB-MULTI} \footnote{\href{https://github.com/weihua916/powerful-gnns}{https://github.com/weihua916/powerful-gnns}\label{GINDATA}}are datasets where each graph represents an actor's ego-network, with nodes as actors and edges as co-appearances in movies, categorized by movie genre.
    \item  \textbf{COLLAB}\textsuperscript{\ref {GINDATA}} is a scientific collaboration dataset derived from three public sources. Each graph in the dataset represents a researcher's self-network. Similar to movie datasets, these graphs are categorized according to the researchers' domains.
    \item  \textbf{MUTAG}\textsuperscript{\ref {GINDATA}} is a bioinformatics dataset containing 188 mutagenic aromatic and heteroaromatic nitro compounds with 7 labels.
    \item  \textbf{PROTEINS} \textsuperscript{\ref {GINDATA}}  is a bioinformatics dataset where nodes denote secondary structure elements with 3 labels. An edge between any two nodes  indicates that they are neighbors in the amino acid
    sequence or 3D space.
    \item \textbf{PTC}\textsuperscript{\ref {GINDATA}} is a bioinformatics dataset comprising  344 chemical compounds that report carcinogenicity for male and female rats and it includes 19 discrete labels.
\end{itemize}

\begin{table*}[htbp]
  \centering
        \caption{Accuracy and F1-score (mean\% and standard deviation\%) of all methods, where the best results are marked in bold and the second-best are in underlined.}
    \resizebox{\textwidth}{!}{
    \begin{tabular}{m{0.8cm}<{\centering}|m{2.2cm}|m{1.6cm}<{\centering}m{1.6cm}<{\centering}m{1.6cm}<{\centering}m{1.6cm}<{\centering}m{1.6cm}<{\centering}m{1.6cm}<{\centering}}
    \toprule
    \multicolumn{1}{c|}{Metric} & Methods / Datasets & Cora & Citeseer &Pubmed& ACM & BlogCatalog & Flickr\\
    \midrule
   
    \multirow{10}[2]{*}{ACC} & GCN   & 80.92 (0.20) & 68.12 (0.72) & 77.78 (0.24)& 85.80 (0.97) & 73.06 (1.00)  & 51.24 (0.95)\\
    &JK-Net & 79.66 (0.47) & 68.74 (0.64) & 73.40 (0.95)& 80.32 (0.36) & 79.56 (0.29) &  49.80 (0.00)\\
    &APPNP & 81.60 (1.05) & 67.58 (1.15) &79.22 (1.57)& 87.64 (0.94) & 81.44 (2.16) &  52.40 (2.04)\\
    &SGC   & 81.06 (0.38) & 70.26 (0.08) &79.82 (1.39)& 85.32 (0.77) & 82.04 (0.29)  & 55.04 (0.34)\\
    &SSGC  & 81.56 (0.27) & 67.36 (0.71) & 80.26 (0.05)& 85.13 (2.88) & 81.20 (0.26) & 55.54 (0.33)\\

    &AdaGCN & 81.00 (0.02) & 70.10 (0.00) & 76.30 (0.20)& 84.70 (5.66) & 83.60 (1.16) & 59.20 (1.10) \\
    &DAGCN & 81.54 (0.43) & 70.14 (1.15) &80.02 (0.31)& 87.36 (0.85) & \underline{87.94 (0.61)} & \underline{61.42 (0.52)}\\
    &GCNII  & \underline{82.12 (0.41)} & \underline{70.40 (0.81)}& \underline{80.26 (0.32)} &\underline{90.84 (0.53)}  & 87.76 (2.65)  &56.48 (1.56)\\
    &DGC-Euler  & 80.49 (0.08) & 69.78 (0.06)& 80.22 (0.06) &85.09 (0.15)  & 83.72 (0.08)  &58.84 (0.05)\\
\cmidrule{2-8}    &RF-GCN & \textbf{83.72 (0.50)} & \textbf{71.88 (0.72)} & \textbf{80.92 (0.61)} &\textbf{91.40 (0.74)} & \textbf{88.08 (1.31)} &   \textbf{64.00  (2.33)}\\
    \midrule
    \multirow{10}[2]{*}{F1} & GCN   & 79.60 (0.29) & 64.03 (0.39) & 77.50 (0.21)& 85.87 (0.95) & 71.43 (1.20)  & 50.30 (0.58)\\
    &JK-Net & 79.66 (0.47) & 65.44 (0.77) & 72.17 (0.88)& 80.23 (0.38) & 79.56 (0.29) &  49.80 (0.00)\\
    &APPNP & 79.35 (0.91) & 65.45 (1.05) &78.88 (1.30)& 87.64 (0.94) & 79.48 (3.14) &  53.00 (2.30)\\
    &SGC   & 79.90 (0.32) & 65.00 (0.05) &79.76 (1.39)& 85.48 (0.41) & 82.04 (0.29)  & 53.13 (0.40)\\
    &SSGC  & 80.49 (0.20) & 63.25 (0.57) & 79.63 (0.42)& 85.12 (2.99) & 81.23 (0.23) & 57.50 (0.31)\\

    &AdaGCN & 79.16 (0.27) & 67.16 (0.00) & 76.23 (0.25)& 59.88 (5.66) & 76.06 (3.57) & 58.50 (1.23) \\
    &DAGCN & 79.64 (0.43) & 64.89 (0.62) &79.84 (0.59)& 87.79 (0.35) & \textbf{87.79 (0.58)} & \underline{62.44 (0.68)}\\
    &GCNII  & \underline{80.78 (0.78)} & \underline{68.20 (0.62)}& 79.77 (0.48) &\underline{91.08 (0.51)}  & 85.69 (1.65)  &59.54 (1.53)\\
    &DGC-Euler  & 79.71 (0.08) & 64.11 (0.05)& \textbf{80.38 (0.06)} & 85.25 (0.15)  & 83.45 (0.09)  &58.94 (0.04)\\
    \cmidrule{2-8}          &RF-GCN & \textbf{82.23 (0.69)} & \textbf{68.71 (0.76)} & \underline{80.14 (0.56)} &\textbf{91.53 (0.70)} & \underline{87.39 (1.36)} &   \textbf{63.88 (1.90)}\\
    \bottomrule
    \end{tabular}}
  \label{semiclassify}%
\end{table*}%
\subsection{Compared Methods}
The methods employed for homophily and heterophily graph datasets are compared here.
GCN \cite{GCN}, SGC \cite{SGC}, SSGC \cite{SSGC}, APPNP \cite{APPNP}, JKNet \cite{JKNET}, AdaGCN \cite{AdaGCN}, DAGCN \cite{DAGCN}, GCNII \cite{GCNII} and DGC-Euler \cite{DGC} for homophily  graph datasets.
Geom-GCN \cite{Geom-GCN:}, HOG-GCN \cite{HOG-GCN}, LINKX \cite{LINKX}, ACM-GCN \cite{ACM-GCN} and LRGNN \cite{LRGNN} for heterophily graph datasets.
 DCNN \cite{DCNN}, PATCHY-SAN \cite{PATCHY-SAN}, DGCNN \cite{DGCNN},  GIN \cite{GIN}, GraphSAGE \cite{GRAPHSAGE}, and GCKM \cite{wuzhihaonips} for graph level datasets. The details of these compared methods are shown below.
\begin{itemize}
    \item \textbf{GCN}  learns node representations by aggregating information from neighbors using a first-order approximation of spectral convolutions on graphs.
    \item  \textbf{GraphSAGE}  is an inductive learning framework that efficiently generates unknown vertex embedding using vertex attribute information.
    \item \textbf{SGC} eliminates nonlinearities between GCN layers, condensing the resulting function into a single linear transformation, aiming to reduce unnecessary complexity. 
    \item \textbf{SSGC} introduces a variant of GCN through the adoption of a modified Markov diffusion kernel, enabling exploration of both global and local contexts of nodes. 
    \item \textbf{APPNP} introduces personalized PageRank-based propagation of neural predictions, requiring fewer parameters and less training time compared to alternative methods.
    \item \textbf{JKNet} flexibly adjusts the range of neighborhood information for each node, allowing for improved structure-aware representations.
    \item \textbf{AdaGCN} integrates learned knowledge from different layers of GCN in an Adaboost-like manner, updating layer weights iteratively.
    \item \textbf{DAGCN} constructs deeper graph neural networks that involve segregating transformation and propagation into two distinct operations.
    \item \textbf{GCNII} resolves the issue of over-smoothing by introducing residuals to GCNs with identity maps.
    \item \textbf{DGC-Euler} mitigates over-smoothing by numerically approximating the heat diffusion equation on the graph and constraining the propagation step size to match the total propagation time.
    \item \textbf{Geom-GCN} introduces an aggregation method for graph neural networks that enhances their ability to understand the node structure in neighborhoods and capture long-range dependencies in non-assortative graphs.
    \item \textbf{HOG-GCN} modifies its propagation and aggregation processes to accommodate the homogeneity or heterogeneity of node pairs, moving beyond the typical assumption of uniform node characteristics.
    \item \textbf{LINKX} is trained and evaluated using a mini-batch approach, which helps avoid performance degradation.
    \item \textbf{ACM-GCN} can adaptively use aggregation, diversification, and identity channels for each node, enabling more comprehensive extraction of localized information suited to various node heterogeneity scenarios.
    \item \textbf{LRGNN} predicts the label relationship matrix by addressing a robust low-rank matrix approximation problem, analyzing the low-rank global label relationship matrix specific to signed graphs.
    \item \textbf{DCNN} employs diffusion convolution operations to extract graph-structured data representations, enabling efficient polynomial-time predictions.
    \item \textbf{PATCHYSAN}  is a method that extracts locally connected regions from graphs, similar to how convolutional networks in image processing target locally connected areas in their inputs.
    \item \textbf{DGCNN} preserves detailed vertex information and learns from the overall graph topology, using spatial graph convolution to process unordered vertex features as input.
    \item \textbf{GIN}  is recognized as one of the most expressive GNN architectures, matching the capabilities of the Weisfeiler-Lehman graph isomorphism test.
    \item \textbf{GCKM}  integrates implicit feature mapping induced by kernels with neighbor aggregation over graphs, offering a novel paradigm for graph-based machine learning.
\end{itemize}

\subsection{Parameter Setups}
For all experiments, we randomly divide samples into a training set consisting of $20$ labeled samples per class, a validation set of $500$ samples, and a test set of $1,000$ samples for the homophily graph datasets. 
For the heterophily graph datasets using the identical train/validation/test splits of 60\%, 20\%, and 20\%, respectively.
For graph-level classification, we implemented 10-fold cross-validation following the GIN setup.
The configurations of the compared baseline methods are set to their default values as specified in their original papers.
For RF-GCN, we utilize the Adam optimizer and set the learning rate and weight decay to $1e^{-2}$ and $5e^{-3}$ respectively. The number of neurons in the hidden layer is 64, the dropout rate is set to 0.5, and the hyperparameter $\alpha$ is chosen from $\{1, 2, 5, 10\}$ for different datasets.
All experiments are run on a platform with AMD R9-5900X CPU, NVIDIA GeForce RTX  4080 16G GPU and 32G RAM.
\begin{table*}[h!]
\setlength{\tabcolsep}{6.5pt}
  \centering
  \caption{Graph classification accuracy (mean\% and standard deviation\%) of all methods, note that
the best results are highlighted in bold and the second-best results highlighted are in underlined.}
  \label{graphclass}
  \resizebox{0.9\textwidth}{!}{
  \begin{tabular}{m{2.3cm}|m{1.45cm}<{\centering}m{1.45cm}<{\centering}m{1.45cm}<{\centering}m{1.45cm}<{\centering}m{1.45cm}<{\centering}m{1.45cm}<{\centering}}
    \toprule
    Methods / Datasets& IMDB-B & IMDB-M & COLLAB & MUTAG & PROTEINS & PTC \\
    \midrule
    PATCHYSAN & 71.0 (2.2) & 45.2 (2.8) & 72.6 (4.2) & \textbf{92.6 (4.2)} & 75.9 (2.8) & 60.0 (4.8) \\
    DGCNN & 70 & 47.8 & 73.7 & 85.8 & 75.5 & 58.6 \\
    AWL & 74.5 (5.9) & 51.5 (3.6) & 73.9 (1.9) & 87.9 (9.8) & - & - \\
    MLP & 73.7 (3.7) & 52.3 (3.1) & 79.2 (2.4) & 84.0 (6.1) & 76.0 (3.2) & 66.6 (6.9) \\
    GIN & 75.1 (5.1) & 52.3 (2.8) & 80.2 (1.9) & 89.4 (5.6) & \underline{76.2 (3.2)} & 64.6 (7.0) \\
    GCN & 74.0 (3.4) & 51.9 (3.8) & 79.0 (1.8) & 85.6 (5.8) & 76.0 (3.2) & 64.2 (4.3) \\
    GraphSAGE & 72.3 (5.3) & 50.9 (2.2) & - & - & 75.9 (3.2) & 63.9 (7.7) \\
    GCKM & \underline{75.4 (2.4)} & \underline{53.9 (2.8)} & \textbf{81.7 (1.5)} & 88.7 (7.6) & 74.5 (3.9) & \underline{67.7 (5.4)} \\
        \midrule
    RF-GCN & \textbf{78.9 (2.4)} &  \textbf{54.9 (2.9)} & \underline{81.4 (1.5)} & \underline{91.5 (4.9)} & \textbf{78.6 (2.4)} & \textbf{68.6 (3.4)} \\
    \bottomrule
  \end{tabular}}
\end{table*}

\begin{figure*}[!htbp]
	\centering
	\includegraphics[width=0.97\textwidth]{./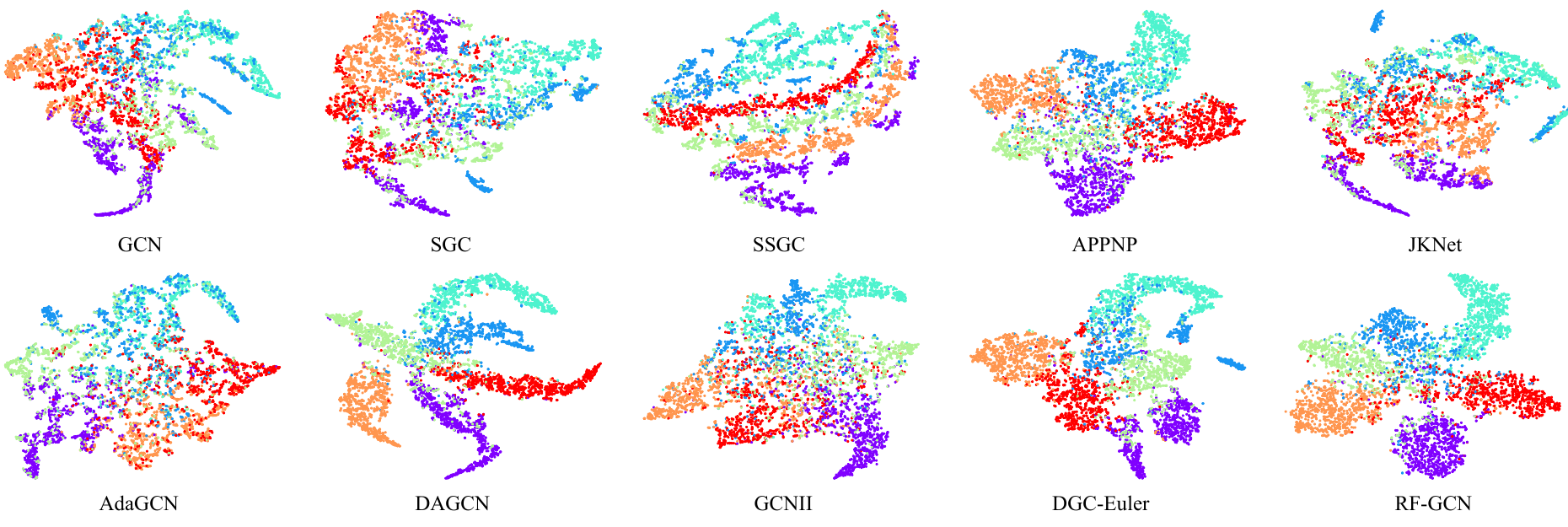}\\
	\caption{The visualization of the representations of all compared methods on the BlogCatalog dataset.}
	\label{Tsne}
\end{figure*}

\begin{table*}[htbp]
  \centering
        \caption{Results with different training set sizes on the Cora dataset in terms of classification accuracy (\%).}
    \resizebox{0.9\textwidth}{!}{
    \begin{tabular}{m{2.8cm}|m{0.7cm}<{\centering}m{0.7cm}<{\centering}m{0.7cm}<{\centering}m{0.7cm}<{\centering}m{0.7cm}<{\centering}m{0.7cm}<{\centering}m{0.7cm}<{\centering}m{0.7cm}<{\centering}m{0.7cm}<{\centering}m{0.7cm}<{\centering}m{0.7cm}<{\centering}}
    \toprule
    Methods / Training size & 1     & 2     & 3     & 4     & 5     & 10    & 20    & 30    & 40    & 50 &100 \\
    \midrule
    GCN   & 32.90 & 34.20 &  49.80 & 56.50  & 59.60 & 66.70  & 79.20  & 79.00 & 79.60 &81.20&85.70\\
    SGC   & 38.40  & 37.80 & 58.80 & 62.10 & 62.71&  68.70  & 76.80 & 78.80  & 79.60 & 79.70 &85.20\\
    APPNP & 37.90  &  63.60 &  63.50 & 71.20 & 77.00  & 76.10 & 79.90 & 80.50 & 81.70 &81.90& 84.50\\
    JKNet & 61.90  & 53.60 & 61.30  & 56.50 & 61.00 & 68.20 & 77.80 & 78.80 & 84.50 & 84.20 &85.20\\
    SSGC  & 37.10 & 42.80 & 60.30 & 65.80 & 67.70 & 72.40 & 78.20& 78.40 & 80.30 & 80.30 & 85.00\\
    AdaGCN & 43.10  &  52.80  & 53.40 & 66.40 &  70.80 & 74.20 & 78.90 & 80.00  &  81.80  &\textbf{85.10}&83.50\\
    DAGCN & 49.70 & 55.10  & 70.50 &  69.10  & 69.80 &  76.30 & 81.20 & 81.30 & 82.30 & 82.80 &\textbf{87.50}\\
    GCNII & 59.80 & 52.50  & 64.80  & 71.20 &74.40& 73.80  & 83.20  & 83.80 &  84.60 &84.60  &87.00\\
    DGC-Euler & 38.72 & 42.23  & 62.82  & 67.42 &71.25& 74.60  & 80.54  & 80.32 &  80.96 &81.02  &84.80\\
        \midrule
    RF-GCN & \textbf{67.00}  & \textbf{63.60} & \textbf{73.00}  &\textbf{77.20} & \textbf{79.40} &\textbf{77.80} & \textbf{84.80}  &  \textbf{84.00} &  \textbf{84.80}   & 84.20 & 86.80 \\
    \bottomrule
    \end{tabular}}
  \label{trainsize}%
\end{table*}%

\begin{table}[!tbp]
  \centering
      \caption{Node classification accuracy (\%) on heterophily  datasets.}
    \resizebox{\linewidth}{!}{
    \begin{tabular}
    {m{1.4cm}|m{1.2cm}<{\centering}m{0.9cm}<{\centering}m{1.2cm}<{\centering}m{0.9cm}<{\centering}}
    \toprule
    Methods & Cornell & Tesax & Wisconsin & Film \\
    Homophily  & 0.3   & 0.11  & 0.21  & 0.22 \\
    \midrule
    Geom-GCN &   47.42   &   74.85    &  64.80  & 26.88 \\
    HOG-GCN & 71.43  &   68.83   &  75.27 &  32.76\\
    LINKX &  77.84  & 74.60 &75.40 & -  \\
    ACM-GCN &   85.71   &  84.00  & 87.20& 35.67 \\
    LRGNN &  86.48  & 90.27 &  \textbf{88.23} & 24.37 \\
        \midrule
    RF-GCN & \textbf{94.29} & \textbf{94.29}    & 88.00  & \textbf{36.18} \\
    \bottomrule
    \end{tabular}}%
  \label{Homophily}%
\end{table}%

\subsection{Semi-supervised Classification}
\textbf{Node Classification on Homophily  Graph}: 
We conduct a semi-supervised classification of nodes on six datasets, and the results for homophily graph datasets are presented in Table \ref{semiclassify}.
It is observed that RF-GCN achieves encouraging results on the majority of the datasets, surpassing the second-highest algorithm by $1.60\%$, $1.48\%$, and $0.70\%$ on the datasets Cora, Citeseer, and Pubmed, respectively.

\textbf{Node Classification on Heterophily  Graph:} The results for heterophily graph datasets are displayed in Table \ref{Homophily}. The table illustrates that RF-GCN consistently achieves promising results when compared with other algorithms. 
Specifically, the proposed method outperformed the suboptimal algorithm by $0.38\%$, $4.02\%$, and $0.51\%$ on the datasets Cornell, Texas, and Film, respectively.
Subpar results are observed solely on the Wisconsin dataset.

\begin{figure*}[!htbp]
	\centering
	\includegraphics[width=\textwidth]{./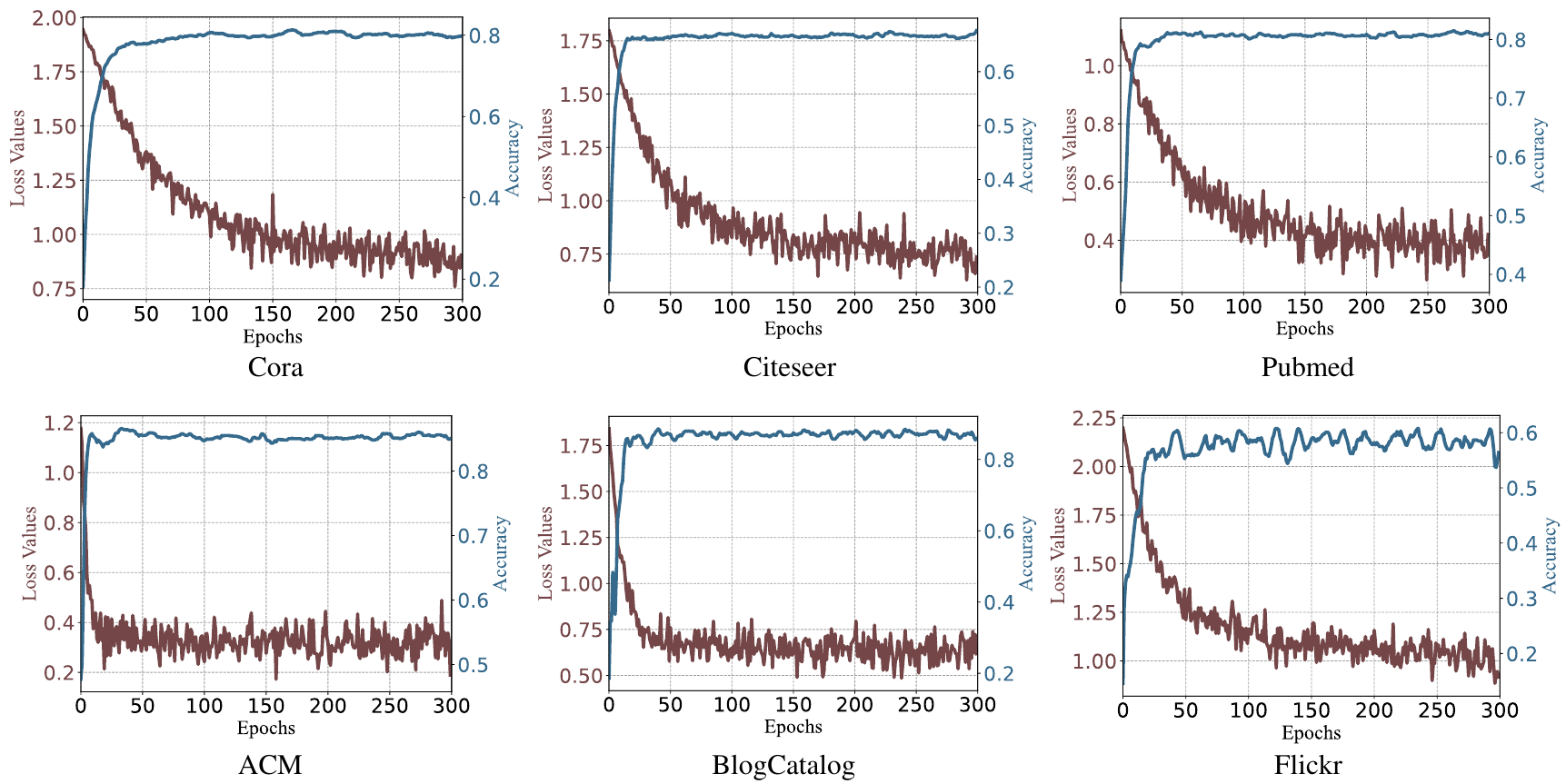}\\
	\caption{The convergence curves of training loss values and validate accuracy with RF-GCN on six datasets.}
	\label{Convergence}
\end{figure*}
\begin{figure*}[!htbp]
	\centering
	\includegraphics[width=0.97\linewidth]{./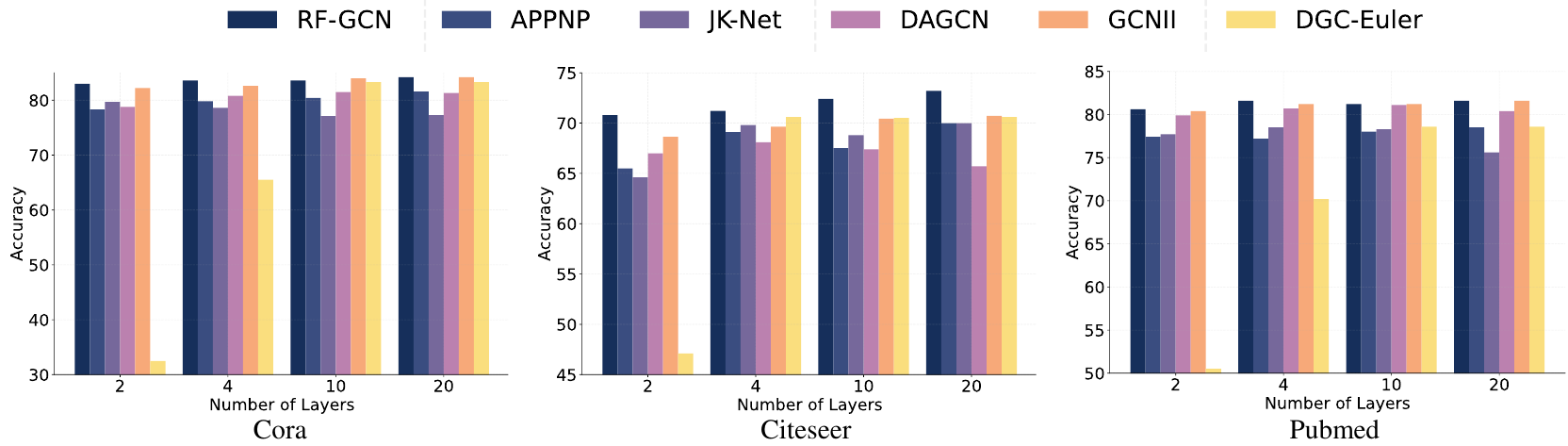}\\
	\caption{Classification accuracy results (\%) for different layers on the dataset Cora, Citeseer, and Pubmed.}
	\label{Depth}
\end{figure*}

\textbf{Graph Classification:}
The classification results for graph-level datasets are presented in Table \ref{graphclass}. The proposed algorithm demonstrates optimal performance on most datasets, outperforming the second-ranked models on IMDB-B, IMDB-M, PROTEINS, and PTC by margins of 3.5\%, 1.0\%, 2.4\%, and 0.9\%, respectively.
Only in the datasets COLLAB and MUTAG does the proposed algorithm underperform relative to GCKM and PATCHYSAN, with accuracies lower by 0.3\% and 1.1\%, respectively.

\textbf{Visualization}:
To better demonstrate the performance of the proposed method,
we take a closer step by performing t-SNE dimensionality reduction on the representations obtained by each algorithm and color the samples according to the ground truth. 
The results are illustrated in Figure \ref{Tsne}. 
From the figure, it is evident that the proposed method exhibits better intra-class aggregation as well as inter-class distinguishability.
This improvement can be attributed to the fact that RF-GCN minimizes the repetitive aggregation of nodes, making different nodes more distinguishable. These observations confirm the superiority of RF-GCN in node representation learning.

\begin{figure*}[!htbp]
	\centering
	\includegraphics[width=0.97\linewidth]{./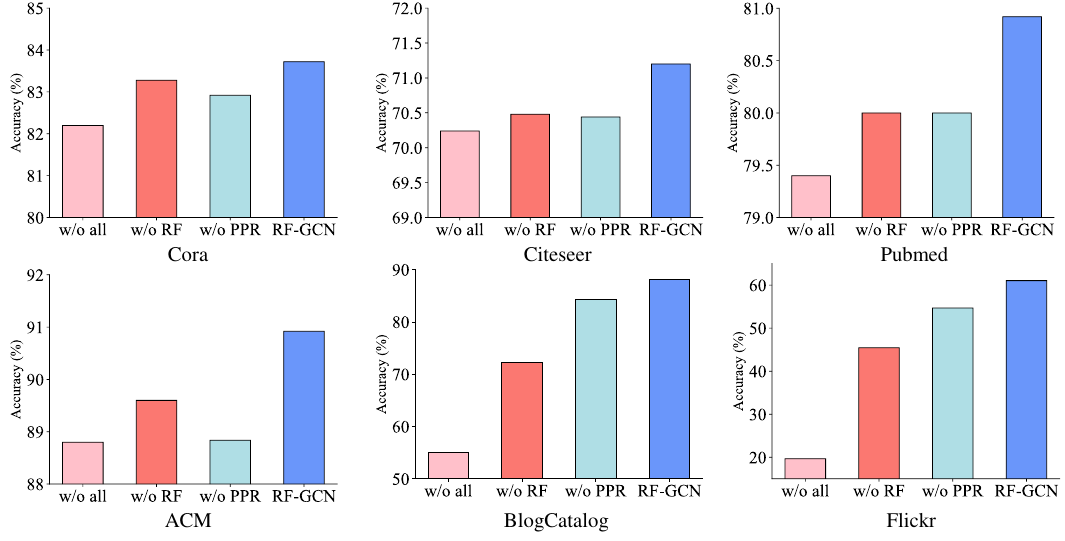}\\
	\caption{The classification accuracy of RF-GCN w.r.t hyperparameter $L$ and $\alpha$ on six datasets.}
	\label{ABLANTION}
\end{figure*}
\begin{figure*}[!htbp]
	\centering
	\includegraphics[width=\linewidth]{./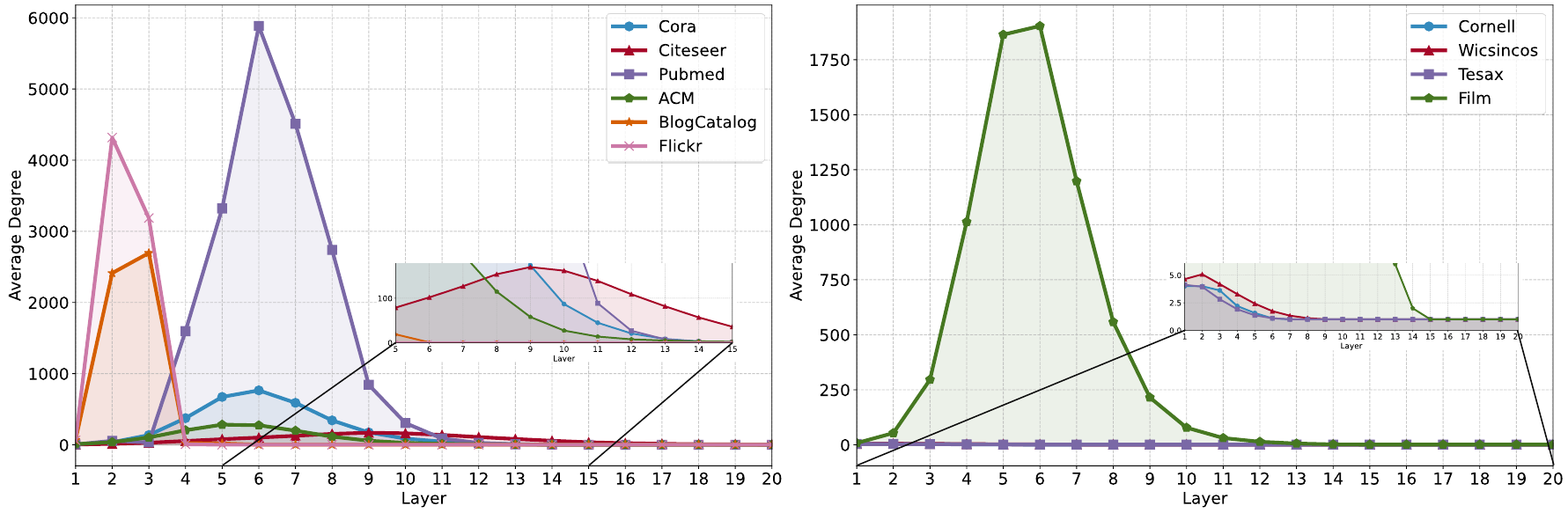}\\
	\caption{The average degree of the redundancy-free adjacency matrices in various layers on six tested datasets.}
	\label{Degree}
\end{figure*}

\textbf{Train Set Size}: 
Given that the number of nodes used for training significantly influences the applicability of the model in real-life situations, we conducted experiments with different training set sizes.
The results shown in Table \ref{trainsize} indicate that even with a minimal training set, RF-GCN outperforms all state-of-the-art methods by a significant margin.
This suggests that RF-GCN is particularly well-suited for real-life scenarios where training samples are limited.
This can be attributed to the capability of RF-GCN to mitigate the impact of indistinguishable parts in the deep layers, providing a heightened sense of versatility compared to GCNs.

\textbf{Convergence Analysis:}
Figure \ref{Convergence} illustrates the convergence curve of the proposed algorithm. As shown, with increasing elapsed time, the training set loss consistently decreases, and the validation set accuracy progressively improves.  Notably, the accuracy on the ACM and BlogCatalog datasets reaches high and stable values more rapidly, at approximately 150 epochs, while other datasets show slower convergence, around 300 epochs.

\begin{figure*}[!htbp]
	\centering
	\includegraphics[width=0.97\textwidth]{./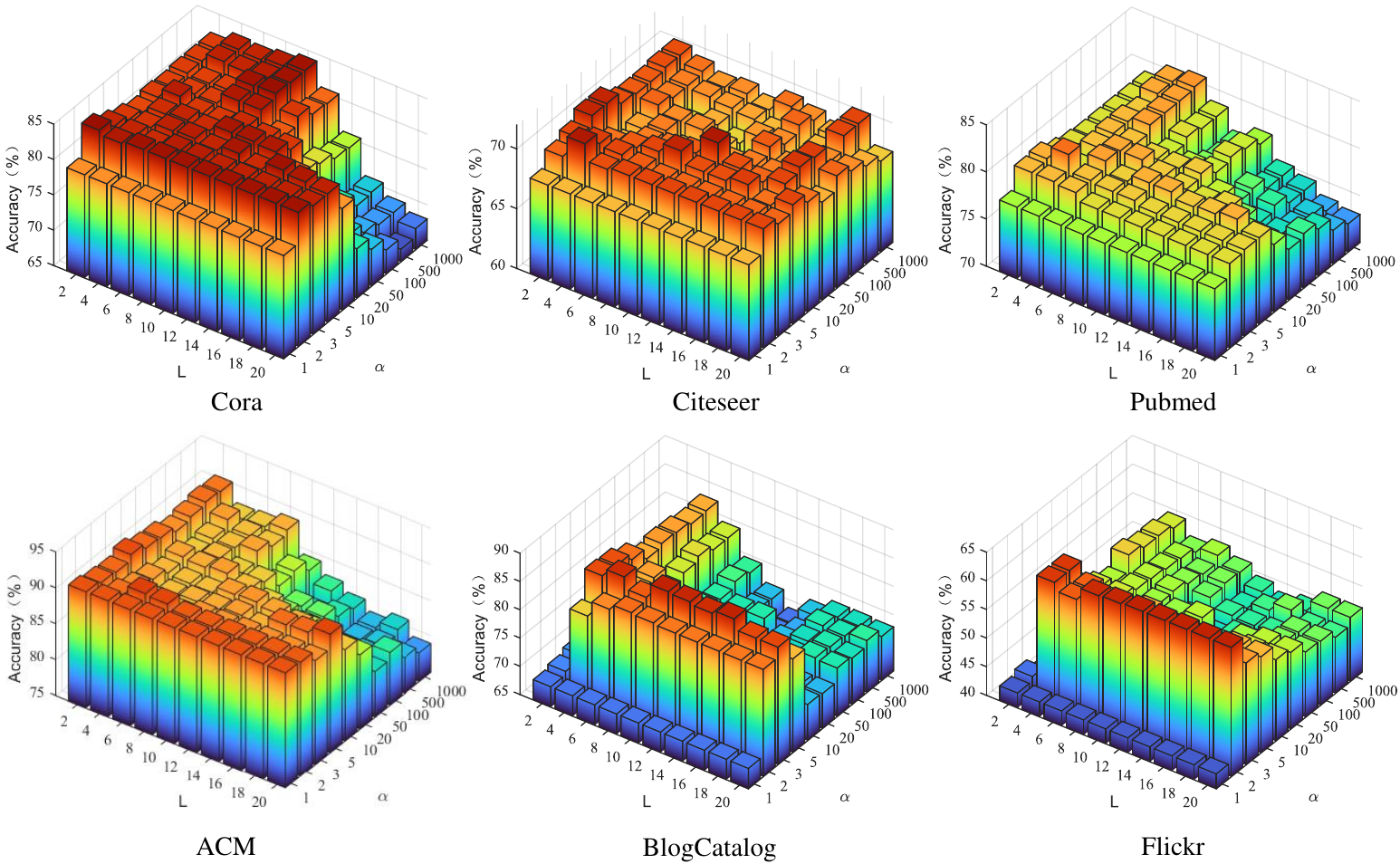}\\
	\caption{The classification accuracy of RF-GCN w.r.t hyperparameter $L$ and $\alpha$ on six datasets.}
	\label{sensitive}
\end{figure*}


\textbf{Over-smoothing Test:}
To verify the performance of the proposed method across multiple layers, we compared the efficacy of various approaches at different depths, with results presented in Figure \ref{Depth}.
As shown in the table, with increasing layers, the accuracy of RF-GCN improves across datasets: from 83.0\% to 84.3\% in the dataset Cora, from 70.8\% to 73.2\% in the dataset Citeseer, and from 80.6\% to 81.6\% in the dataset Pubmed.
This indicates that the network structure can effectively utilize the additional layers to enhance classification accuracy. 
Across all layer configurations, RF-GCN demonstrates the highest accuracy, highlighting its robust data-fitting and generalization capabilities.

\subsection{Ablation Study}
In this section, we validate the effectiveness of the proposed method by systematically removing modules, both with and without redundancy-free adjacency matrices and PPR coefficients. The results are presented in Figure \ref{ABLANTION}. 
The table indicates a gradual performance improvement as the modules are sequentially stacked. 
It is noteworthy that the proposed method shows significantly degraded performance on the datasets BlogCatalog and Flickr when redundancy-free adjacency matrices and PPR coefficients are not utilized.
This performance degradation may be due to the denser node connectivity and higher heterophilous rates of these two datasets. Such conditions often decrease performance when multiple layers are stacked in the proposed model.


\subsection{Parameter Sensitivity Analysis}
Figure \ref{Degree} illustrates the average degree of redundancy-free matrices across ten datasets with varying numbers of layers.
As observed from the figure, the average degree exhibits an initial increase followed by a decrease. For datasets such as Cora, Citeseer, Pubmed, Film, and ACM, the trend is smoother, reaching its peak at layers $6$ and $7$ before gradually converging to $0$ at layer $15$.
In contrast, datasets like Blogcatalog, Cornell, Wicsincos, Tesax, and Flickr display a more undulating pattern, reaching the highest point at layers $2$ and $3$, before eventually converging to $0$ at layers $6$.
This phenomenon can be attributed to the low number of graph nodes and the high number of edges in both datasets, leading to a dense graph. As a result, a very small number of hops can effectively cover the entire graph.
In summary, for most graph datasets, a minimal number of hops is sufficient to capture global information.

Figure \ref{sensitive} illustrates the variation in accuracy for different numbers of layers and various hyperparameters $\alpha$. 
For most datasets, the performance can be maintained at a high level as the number of layers increases when the value of $\alpha$ is within the range $[2,5]$.
One possible reason for this is that when $\alpha$ is too large, high-order neighbors are given the same importance as low-order neighbors, leading to a decrease in performance.

\section{Conclusions} \label{SEC5}

In this paper, we analyzed how existing GCNs over-utilize the information from low-order neighbors when acquiring information over long distances, thus resulting in diminished attention to a node's self-information. 
Although adding residuals can improve attention to the node's self-information, the phenomenon of over-utilizing low-order neighbors persists due to the unchanged message aggregation mechanism.
Therefore, we introduced a framework called RF-GCN. 
This network achieved its objective by hierarchical organizing the node neighbors, with the number of layers being solely related to the graph structure itself. Experiments on sixteen real-world datasets for a node classification task demonstrate that RF-GCN outperforms several state-of-the-art competitors.

This study identifies several promising avenues for further research. While current efforts have largely centered on addressing the over-smoothing problem, it is evident that in many real-world applications, only shallow GCNs are necessary to achieve promising results, with a little additional benefit gained from deeper networks. This raises the question of the necessity and potential benefits of building deeper GCNs, which warrants further investigation.

\bibliographystyle{ieeetr}
\bibliography{tnnls}
\end{document}